\newtheorem{proposition}{Proposition}
\setlist{nosep, leftmargin=14pt}
\DeclareRobustCommand{\hllime}[1]{{\sethlcolor{lime}\hl{#1}}}
\title{Normal Reconstruction from Specularity in the Endoscopic Setting}
\name{Karim Makki, Adrien Bartoli\thanks{This work was funded by the FET-Open grant 863146 Endomapper.}}
\address{ EnCoV-Institut Pascal, CNRS/Université Clermont-Auvergne, Clermont-Ferrand 63000, France}
\begin{document}
%\ninept
%
\maketitle
\begin{abstract}
We show that for a plane imaged by an endoscope the specular isophotes are concentric circles on the scene plane, which appear as nested ellipses in the image. 
We show that these ellipses can be detected and used to estimate the plane's normal direction, forming a normal reconstruction method, which we validate on simulated data.
In practice, the anatomical surfaces visible in endoscopic images are locally planar.
We use our method to show that the surface normal can thus be reconstructed for each of the numerous  specularities typically visible on moist tissues. 
We show results on laparoscopic and colonoscopic images.
\end{abstract}
\begin{keywords}
Specular, reconstruction, endoscopy.
\end{keywords}
\section{Introduction}
\label{sec:intro}
%add Forsyth and ponce as ref.
%lemmas and proofs with good environents.

%Specularity detection and removal from images is an active research topic in computer vision~\cite{artusi2011survey}.  
Specularities typically appear as moving white spots on the tissues in endoscopy; they are usually considered as nuisance~\cite{daher2022temporal}. 
In contrast, we propose to use them as 3D reconstruction cues arising from a single image.
They may then be used as valuable assets to integrate in existing multi-image endoscopic 3D reconstruction systems~\cite{bell2013image,ozyoruk2021endoslam, sengupta2021colonoscopic}. Although shape-from-specularity has been researched in computer vision,
our proposal of using the specular isophotes for 3D reconstruction is new. %\hl{And even the work}~\cite{healey1988local} \hl{ proposing a physical model to infer the local surface shape from specular intensity profiles assumes that the positions of the viewer and light source are known.}
Existing work is generally based on tracking specularities or reflections over multiple images. The analysis in~\cite{zisserman1989information} shows that local surface convexity or concavity can be determined from the image motion of specularities with a known camera trajectory. 
The method in~\cite{chen2006mesostructure} reconstructs the shape mesostructure from a dense specularity field from hundreds of images. 
The  method in~\cite{roth2006specular} uses the specular flow for estimating scene structure from 2D image motion. The method in~\cite{sankaranarayanan2010specular} uses a quadric surface model, an orthographic camera, and requires the object to rotate around the optical axis to reconstruct the shape of a mirror surface from sparse correspondences. 
The method in~\cite{liu2011pose} reconstructs the pose of a moving plane from three images of its reflection on a specular surface.
These methods rely on assumptions which cannot be fulfilled in interventional medical images. 

In contrast, we provide theoretical and practical insights into the 3D reconstruction of the observed tissue's normal from an isolated specularity.
We show that once a specularity is detected, the shape normal at the specularity's Brightest Point (BP) can be estimated in closed-form. 
We derive these results under the so-called \hllime{endoscopic setup, which is that the light is a point-light source collocated with the camera centre~(see} \cite[Section 2.3]{okatani1997shape} \hllime{and}~\cite[Section 2]{yeung1999global} \hllime{for details)}. 
Our main assumption is then that the observed surface is locally flat and that the reflectance is modelled by Phong's specular model.
We study the isophotes, which are the curves of equal incoming intensities on the camera's retinal plane.
We show that under the above assumptions the isophotes are concentric circles on the scene plane and ellipses in the image. 
From this key result, we use well-known 3D vision results on conics, of which the ellipse is a special case.
Specifically, we show that using a single ellipse isophote detected in an image, the scene plane's normal can be reconstructed up to a two-fold ambiguity.
This leads to a reliable method of local normal reconstruction for mildly curved surface patches.
We show results on synthetic and real images of the liver and the colon.

%Tweak pose estimation
%TIn this recent paper for instance, the authors proposed to remove specular highlights from endoscopic videos using machine learning. 

%These guidelines include complete descriptions of the fonts, spacing, and
%related information for producing your proceedings manuscripts.

\section{Theory and methods}
\label{sec:methods}
%\subsection{The General Quartic Specularity Model}
\subsection{General Specular Isocurve Model}
\label{sssec:quartic model} 
A model of specular isophotes on planes was introduced~\cite{bartoli2019highlight} with Phong's reflection~\cite{phong1975illumination}, where it was shown that the specular isophotes are subset of quartic isocurves on the scene plane.
We briefly recall and adapt the derivation of this model and specialise it to the endoscopic setup.
The scene plane is chosen as the $XY$-plane with 2D points $p^\top=(x,y)$ corresponding to 3D points $P^\top=(x,y,0)$. 
Defining $V, L \in \mathbb{R}^3$ as the viewpoint and light source position, the cosine formula~\cite{phong1975illumination} gives specular reflection as: 
\begin{equation}
\hat{I}_s(p) \overset{\text{def}}{=} c \max (0,\cos(\beta(p)))^n ,
\label{Eq1}
\end{equation}
where $n>0$ characterises the surface roughness, $c>0$ is an unknown constant factor modelling the surface albedo, light intensity and linear camera response coefficient, and $\beta(p)$ is the angle between the viewing direction $\mu(V-P)$ and the direction of perfect reflection $-\mu(R-P)$ such that $\mu (U) \overset{\text{def}}{=} \frac{U}{\|U\|}$ for $U \in \mathbb{R}^3$ and $R^\top = (L_X, L_Y, -L_Z)$. 
By substituting these components in equation~(\ref{Eq1}), simplifying and introducing $\tau = \sqrt[n]{\frac{t}{c}}$, it was shown~\cite{bartoli2019highlight} that, on an infinite scene plane, the specular isophote $\hat I_s(p)=t$, $0<t\leq c$ is a subset of the quartic isocurve (that is, given by nullifying a bivariate polynomial of degree four) $Q(p)=0$, called the {\em general specular isocurve}, with:
\begin{equation}
Q(p) = ((R-P)^\top(V-P))^2 - \tau^2 \|R-P\|^2 \|V-P\|^2.
\label{Eq_quartic}
\end{equation}
We point out the distinction that exists between an {\em isophote} $\hat I_s(p)=t$, which is a curve on the scene plane of constant intensity as observed in the image, and its model as a quartic {\em isocurve} $Q(p)=0$.
The quartic isocurve is a pair of curves; parts of which form the isophote~\cite{bartoli2019highlight}, in a non-trivial and non-systematic way.
We show that simplifications exist in the endoscopic setup.

\subsection{Endoscopic Specular Isophote Model}
\label{sssec:endoscopic model} 
In the endoscopic setup, the light source and the camera are co-located and we thus set $L = V$.
%and so $R = (V_X,V_Y,-V_Z)$. 
%In this case, we show that the specular isophote is a circle in the scene plane, that projects into an ellipse in the imaged scene plane.
%\st{The last equality holds thanks to the following standardization: 1) rotating (around the $Z$-axis) the line containing $r_{\perp}$ and $v_{\perp}$, the orthogonal projections of $R$ and $V$ on $S$, respectively. This aligns this line with the $X$-axis. And 2) aligning the world origin with the BP by translation along the $X$-axis.} 
We substitute $\kappa = 1 - \tau^2$ and $R^\top= (V_X,V_Y,-V_Z)$ in equation~\eqref{Eq_quartic}, which after simplification yields the {\em endoscopic specular isocurve} $E(p)=0$, with:
\begin{equation}
\begin{aligned}
E(p) = \kappa ((x-V_X)^2+(y-V_Y)^2)^2 + \\2V_Z^2(\kappa-2) 
((x-V_X)^2+(y-V_Y)^2) + \kappa V_Z^4.
\label{Eq6}
\end{aligned}
\end{equation}
This is still a quartic equation, albeit of a simpler form than the general  quartic isocurve~(\ref{Eq_quartic}).
Our main results are next given as two propositions, which characterise the geometric curve represented by this quartic.

\begin{proposition}
The endoscopic specular isocurve $E(p)=0$ %$\Psi_t \overset{\text{def}}{=} \{(x,y) \in \mathbb{R}^2 | Q(x,y)=0\}$ 
represents a pair of concentric circles with centre $(V_X,V_Y)^\top$ and radii $r_\pm = |V_Z| \sqrt{ \frac{2-\kappa}{\kappa} \pm \sqrt{(\frac{2-\kappa}{\kappa})^2 -1}}$.
\label{proposition1}
\end{proposition}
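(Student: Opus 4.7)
The plan is to exploit the rotational symmetry of $E(p)$ about $(V_X, V_Y)$: the expression depends on $(x,y)$ only through $u \overset{\text{def}}{=} (x-V_X)^2 + (y-V_Y)^2$. Under this substitution the quartic $E(p)=0$ collapses to the quadratic
\begin{equation*}
\kappa u^2 + 2V_Z^2(\kappa - 2)\, u + \kappa V_Z^4 = 0
\end{equation*}
in the single variable $u$. Any non-negative root $u_\pm$ yields the level set $(x-V_X)^2 + (y-V_Y)^2 = u_\pm$, which is exactly a circle centred at $(V_X, V_Y)^\top$ with radius $\sqrt{u_\pm}$. Hence proving the proposition reduces to (i) solving this quadratic and (ii) verifying that both of its roots are real and non-negative, so that two concentric circles are indeed obtained.

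For step (i), I would apply the quadratic formula to obtain $u_\pm = V_Z^2\bigl((2-\kappa) \pm \sqrt{(2-\kappa)^2 - \kappa^2}\bigr)/\kappa$, then absorb the $\kappa$ in the denominator under the inner radical using $\sqrt{(2-\kappa)^2 - \kappa^2}/\kappa = \sqrt{\bigl((2-\kappa)/\kappa\bigr)^2 - 1}$, which is valid for $\kappa>0$. Taking $r_\pm = \sqrt{u_\pm}$ and handling the sign of $V_Z$ via $|V_Z|$ then recovers the stated closed-form.

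For step (ii), I would invoke the physical range of $\tau$: since $\tau = \sqrt[n]{t/c}$ with $0 < t \le c$, we have $\tau \in (0,1]$ and hence $\kappa = 1 - \tau^2 \in [0,1)$. In the non-degenerate regime $\kappa \in (0,1)$, Vieta's formulas give product of roots $V_Z^4 > 0$ and sum of roots $2V_Z^2(2-\kappa)/\kappa > 0$, so both roots are strictly positive, while the discriminant simplifies to $4V_Z^4(1-\kappa) \ge 0$. This confirms that $E(p)=0$ is the union of two real concentric circles with the claimed radii; the degenerate limit $\kappa\to 0$ (i.e.\ $t\to c$) collapses them to the single point $(V_X,V_Y)^\top$, the brightest point, which is geometrically consistent. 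The only real obstacle is conceptual, namely spotting the radial symmetry that reduces the quartic in $(x,y)$ to a quadratic in $u$; once that is observed, the remainder is routine algebra and a brief discussion of the admissible parameter range.
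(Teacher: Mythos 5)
Your proof is correct and follows essentially the same route as the paper: the paper's one-line factorisation $E(p)=q_+(p)\,q_-(p)$ with $q_\pm(p)=(x-V_X)^2+(y-V_Y)^2-r_\pm^2$ is exactly your observation that $E$ is a quadratic in $u=(x-V_X)^2+(y-V_Y)^2$ with roots $r_\pm^2$. Your added check that both roots are real and positive for $\kappa\in(0,1)$ is a welcome explicit detail that the paper only supplies later (in the proof of Proposition 2, via $r_+>|V_Z|>r_->0$).
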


%\textbf{Proof:}
\begin{proof}
%Without a loss of generality, we can set $V_X = V_Y=0$. This is possible since translating the camera along the $X$ and $Y$ directions will only translate the specular highlight in the scene plane image without changing the radii. We thus have:
%\begin{equation}
%Q(x,y) = \kappa (x^2+y^2)^2 + 2V_Z^2(\kappa-2) (x^2+y^2) + \kappa V_Z^4
%\label{Eq7}
%\end{equation}
The quartic~\eqref{Eq6} factorises into the product of two quadratics as:
\begin{equation}
E(p) = q_+(p)\,q_-(p), %\prod_{i=1}^{2} q_i(x,y),% \cdot q_2(x,y), 
\label{Eq7}
\end{equation}
where $q_\pm(p)= (x-V_X)^2+(y-V_Y)^2 -r_\pm^2$ and $q_+,q_-$ represent the sought circles.
%It is clear that $Q'$ vanishes once $q_1$ or  $q_2$ do so. Expanding and rearranging $q_1(x,y) \cdot q_2(x,y)$, dividing  $Q(x,y)$ by $\kappa \neq 0$ in~\eqref{Eq6}, then by identification we have that $Q(x,y) = 0$ is equivalent to:
% \begin{align}
% \begin{cases}
%  (x^2+y^2)^2 + 2 V_Z^2 \frac{\kappa-2}{k} (x^2+y^2) + % V_Z^4 = 0\\
%  (x^2+y^2)^2 -(r_1^2+r_2^2) (x^2+y^2) + r_1^2  r_2^2 = 0
%  \end{cases}\,.
% \label{Eq8}
% \end{align}
% By identification, we have: 
%\begin{equation}
%    \begin{cases}
%      r_1^2 + r_2^2 = 2 V_Z^2  (\frac{2-\kappa}{\kappa})\\
%      r_1^2  r_2^2 = V_Z^4
%    \end{cases}\,.
%    \label{Eq9}
%\end{equation}
%Solving the system of Eq~\eqref{Eq9} for $r_1$ and $r_2$, and removing the multiple (double) roots 
%Which completes the proof of Proposition~\ref{proposition1}.
\end{proof}

\begin{proposition}
The endoscopic specular isophote is the circle of smallest radius $r_-$ from the endoscopic specular isocurve.
\label{proposition2}
\end{proposition}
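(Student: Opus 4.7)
The plan is to exploit the fact that the isocurve $E(p)=0$ was obtained from the isophote equation $\cos(\beta(p))=\tau$ by a squaring step (the one that clears the square roots in the Cauchy--Schwarz-like expression of the cosine), so it admits both $\cos(\beta(p))=+\tau$ and $\cos(\beta(p))=-\tau$ as solutions. The negative family yields a non-positive cosine, which is clamped to zero by the $\max$ operator in equation~\eqref{Eq1} and therefore does not lie on the isophote. It only remains to identify which of the two circles of Proposition~\ref{proposition1} carries the positive cosine.

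To do so, I would first evaluate $\cos(\beta(p))$ in closed form at an arbitrary scene point. Under $L=V$ and $R^\top=(V_X,V_Y,-V_Z)$, the vectors $V-P$ and $P-R$ share the horizontal component $(V_X-x,V_Y-y)^\top$ but have opposite vertical components, hence a common norm $\sqrt{\rho(p)^2+V_Z^2}$ with $\rho(p)^2=(x-V_X)^2+(y-V_Y)^2$. A short dot-product calculation gives
\[
\cos(\beta(p))=\frac{V_Z^2-\rho(p)^2}{V_Z^2+\rho(p)^2},
\]
which is positive strictly inside the circle of radius $|V_Z|$ centred at $(V_X,V_Y)$ and negative strictly outside it. Using $\kappa=1-\tau^2$ the radii of Proposition~\ref{proposition1} simplify to $r_\pm^2=V_Z^2\,\frac{1\pm\tau}{1\mp\tau}$, and direct substitution yields $\cos(\beta)=+\tau$ on the inner circle $q_-(p)=0$ and $\cos(\beta)=-\tau$ on the outer circle $q_+(p)=0$.

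The conclusion then follows immediately: on the inner circle the Phong intensity equals $c\,\tau^n=t$, so this circle is the isophote; on the outer circle the $\max$ in equation~\eqref{Eq1} forces $\hat I_s=0$, so this circle is a spurious artefact of the squaring. The only real pitfall is to be explicit about the step at which the sign information was lost when deriving $E(p)=0$ from equation~\eqref{Eq1}; once that is stated clearly, the algebra is short and reduces to simplifying the radii through the substitution $\kappa=1-\tau^2$.
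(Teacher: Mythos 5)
Your proof is correct, but it takes a different route from the paper. The paper's proof is short and external: it invokes \cite[lemma 27]{bartoli2019highlight}, which states that in the general (non-endoscopic) case the isophote lies inside the ``outer ring'' of radius $|V_Z|$ centred at $(V_X,V_Y)^\top$, and combines this with the inequality $r_+ > |V_Z| > r_- > 0$ (valid for $0<t<c$, i.e.\ $0<\kappa<1$) to conclude that the isophote must be the inner circle. You instead argue self-containedly inside the endoscopic setup: you identify the squaring step that makes the isocurve capture both $\cos\beta=+\tau$ and $\cos\beta=-\tau$, compute $\cos\beta(p)=\frac{V_Z^2-\rho(p)^2}{V_Z^2+\rho(p)^2}$ in closed form, simplify the radii to $r_\pm^2=V_Z^2\frac{1\pm\tau}{1\mp\tau}$, and check by substitution that the inner circle carries $\cos\beta=+\tau$ (hence intensity exactly $t$) while the outer circle carries $\cos\beta=-\tau$ (hence intensity clamped to $0$ by the $\max$). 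Your argument buys more than the paper's: it needs no external lemma, it explains the outer circle as an artefact of the squaring, and it establishes that the \emph{entire} inner circle is the isophote rather than merely locating the isophote among the two circles; the paper's version buys brevity and reuse of an already-published result. One small slip to fix: with $V-P=(V_X-x,\,V_Y-y,\,V_Z)^\top$ and $P-R=(x-V_X,\,y-V_Y,\,V_Z)^\top$, the two vectors share the \emph{vertical} component $V_Z$ and have \emph{opposite horizontal} components, not the other way around as you wrote; your final dot-product formula is nevertheless the correct one, so only the wording needs correcting.
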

\begin{proof}
For $0<t<c$ we have $0<\kappa<1$ and $r_+ > |V_Z| > r_- > 0$.
It was shown~\cite[lemma 27]{bartoli2019highlight} that, in the general case, the isophote lies in a so-called `outer ring', a circle of radius $|V_Z|$ and with centre $(V_X,V_Y)^\top$.
Therefore, in the endoscopic case, the isophote must be the smallest of the two circles of the isocurve.
%When we get closer to the outer ring (when $t \rightarrow 0^+$), then $\kappa \rightarrow 1$, and the two radii tend to meet in $r_- \rightarrow r_+ \approx |V_Z|$ \hl{(see Lemma 27 in}~\cite{bartoli2019highlight} \hl{for more details)}. It follows that the observed isophote is always the circle of smallest radius. %$\mathcal{C}((0,0), r_1)$.
\end{proof}

Figure~\ref{fig:fig1}(a) illustrates the isocurves obtained for simulated data, as described in section~\ref{sec:results}.
We clearly see that the specularity is radial on the scene plane and that the isophote is given by the circle of radius $r_-$ (in green), lying inside the outer ring (in orange), whilst the circle of radius $r_+$ lies outside it (in red).
This is verified for all isophotes in figure~\ref{fig:fig1}(b).

%\begin{figure}[!ht]
%\centering
%\includegraphics[width=0.3\textwidth]{}

%\caption{\label{fig:fig1} Radius-to-view distance  ratios. Without a loss of generality, we show curves for $V_Z > 0$ since we refer to a camera viewing at the front side of the plane.}
%\end{figure} 

\begin{figure}[!h]%
    \centering
    \subfloat[\centering Isocurves for $\kappa=0.55$]{{\includegraphics[width=3.8cm]{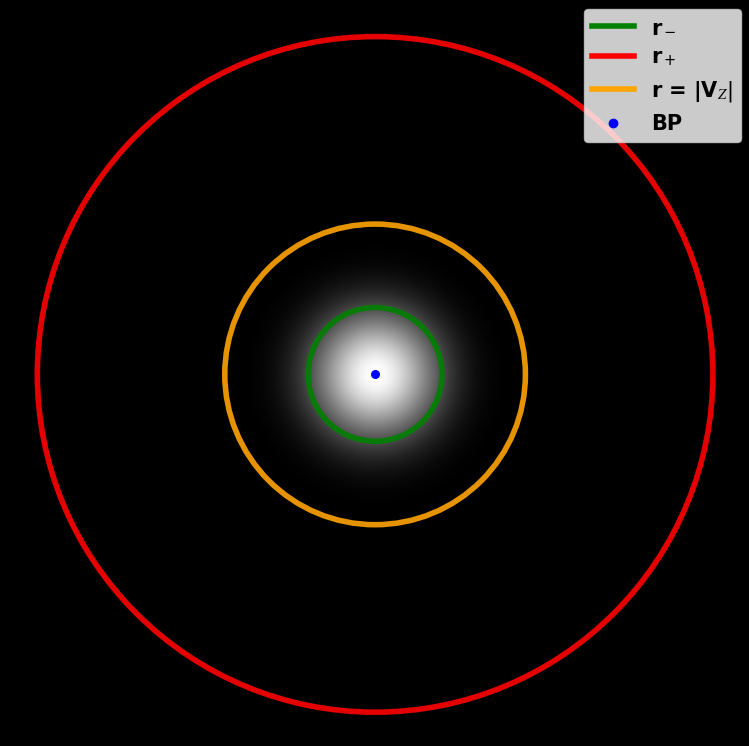} }}%
    \qquad
    \subfloat[\centering Radii against $\kappa$]{{\includegraphics[height=3.7cm,width=3.8cm]{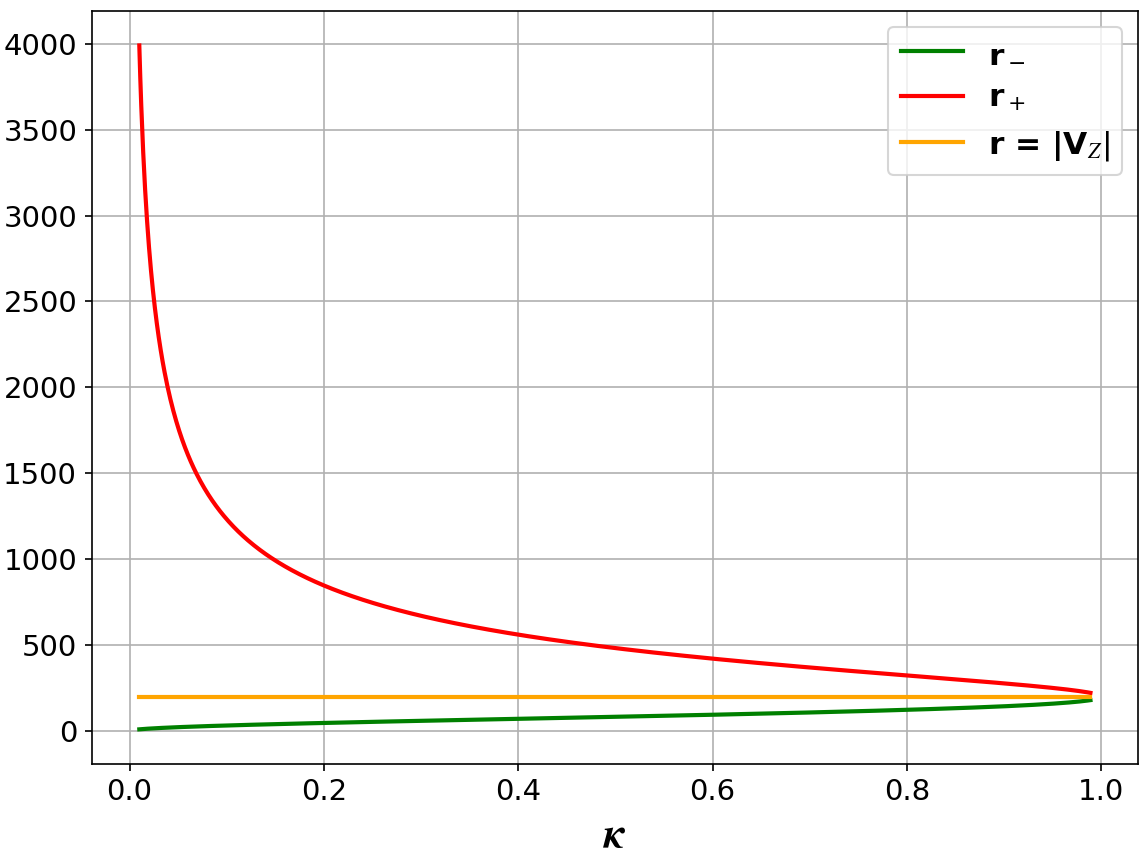} }}%
    \caption{Endoscopic specular isocurves on the scene plane.}%
    \label{fig:fig1}%
\end{figure}

\subsection{Reconstruction Method}
\label{sssec:normal reconstruction} 
Proposition~\ref{proposition2} shows that the specular isophotes are concentric circles on the scene plane.
By property of perspective projection, these circles form ellipses on the image plane in the general case. 
We use this property to reconstruct the plane normal from the specularity observed in the image in closed-form.
We denote as $C\in\mathbb{R}^{3\times3}$ the symmetric matrix representation of the isophote ellipse as a so-called point-conic in the image. 
Background on ellipses and conics is given in~\cite[Chapter 8]{hartley2003multiple}.
We proceed with a given isovalue $0<t<m$, where $m$ is the maximum image intensity (typically $m=255$ for 8-bit images), a given region of interest $\Omega$ containing the specularity and a given intrinsic camera parameter matrix $K\in\mathbb{R}^{3\times3}$ obtained by classical calibration~\cite[Chapter 6]{hartley2003multiple}.
Our method follows four steps.
First, we find the image pixels pertaining to the isophote in $\Omega$ using the Marching Squares algorithm, the 2D version of the popular Marching Cubes algorithm.
Second, we fit the isophote ellipse using the closed-form solution~\cite{fitzgibbon1999direct}.
This gives us matrix $C$.
Third, we transfer the ellipse to the normalised camera plane as $C'=K^\top C K$.
Fourth, we use the circle pose estimation method~\cite[Appendix A]{forsyth1991invariant}, or so-called backprojection problem for one conic correspondence, which provides us with two possible normals $N_\pm$ for the scene plane.
These two normals are due to a concave/convex surface ambiguity, under the assumption that the isophote lies on the tangent plane to the observed surface at the BP.
This ambiguity cannot generally be resolved from a single viewpoint~\cite{zisserman1989information}.
%\[
%  \textbf{C} =
%  \left[ {\begin{array}{ccc}
%    A & B/2 & D/2 \\
%    B/2 & C & E/2 \\
%    D/2 & E/2 & F
%  \end{array} } \right],
%\]
%where $A x_e^2 + B x_e y_e + C y_e^2 +D x_e + E y_e + F=0$, and
%where $\textbf{C}$ satisfies $\text{x}_e^T \textbf{C} \text{x}_e= 0$ such that $\text{x}_e$ is the homogeneous coordinate column vector $(x_e, y_e, 1)^T$. The entries of $\textbf{C}$ can be determined in closed-form~\cite{fitzgibbon1999direct}.
%Let $\textbf{K}_n$ be the matrix:
%\[
%  \textbf{K}_n =
%  \left[ {\begin{array}{ccc}
%    1 & 0 & c_x/f \\
%    0 & 1 & c_y/f \\
%    0 & 0 & 1/f
%  \end{array} } \right],
%\]
%where $(c_x,c_y)$ and $f$ represent the optical centre and focal length of the camera, respectively. Then, $\textbf{C}_n = \textbf{K}_n^T \textbf{C} \textbf{K}_n$ describes the same cone in the normalized image plane with unitary focal length and with the origin being the optical centre.
%After solving the back projection problem for one conic correspondence (for $\textbf{C}_n$) using the closed-form solution described in~\cite{forsyth1991invariant}, we end up with two possible orientations of the scene plane in the camera coordinate system. 
If the circle radius were known, the camera to plane distance could be reconstructed.
However, as opposed to observing physical circles, the circle radius related to a specularity is never known, and the camera to plane distance is thus unrecoverable.

%Now, given a circle of any arbitrary radius in the scene plane, and its corresponding conic described by $C_n$, one can use the method detailed in~\cite{forsyth1991invariant} to determine the orientation of the scene plane from a single camera viewpoint (with a two-fold ambiguity). It is also possible to determine the view distance but the circle radius must be known in advance. 

\section{Experimental Results}
\label{sec:results}

\begin{figure*}
\centering
\includegraphics[width=0.9\textwidth]{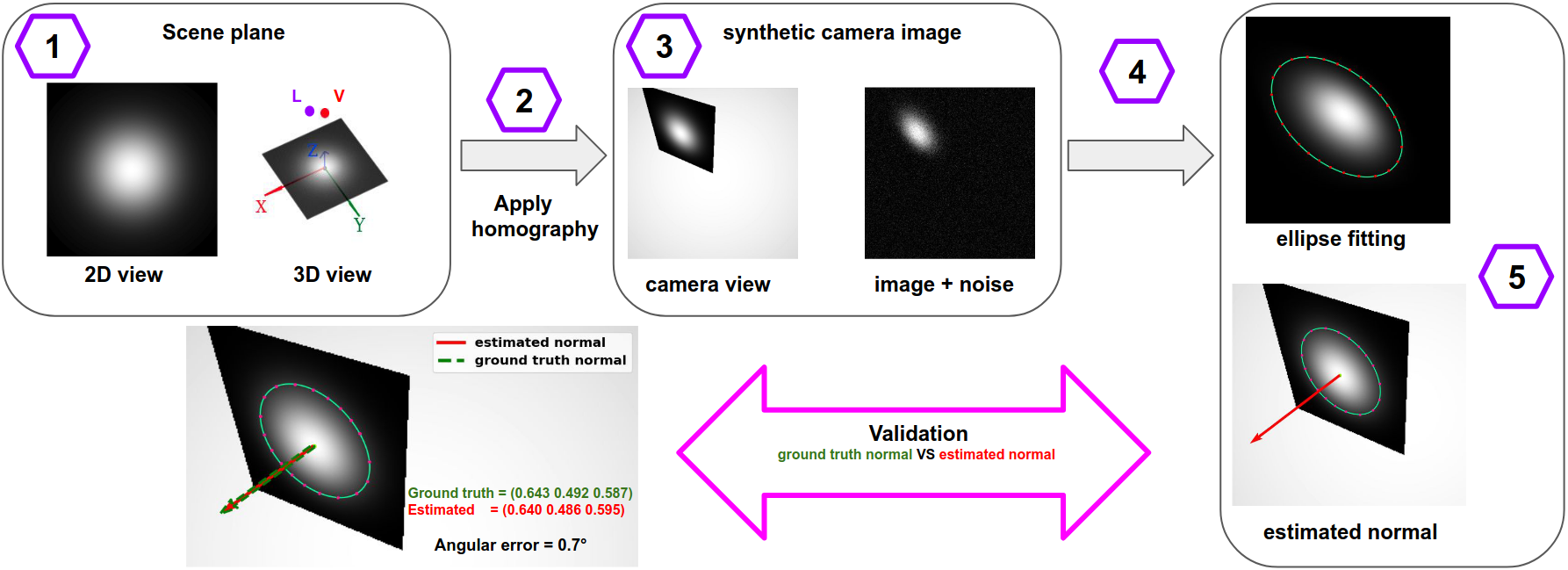}
\caption{\label{fig:fig2} Pipeline for the validation of normal reconstruction from specular isophotes on simulated data.}
\end{figure*} 

\subsection{Synthetic Data}
\label{subsec:s_data}
Figure~\ref{fig:fig2} illustrates the three-stage evaluation pipeline with synthetic data.
In stage 1, we simulate the imaging process with the following three steps. 
1) We compute the scene plane texture map as an $M\times M$ image using equation~\eqref{Eq1} with material roughness $n$. Without loss of generality, we set $V_X= V_Y= 0$ so that the BP lies at the image center. We thus have that $V_Z > 0$ is the camera to scene plane distance. 
We simulate a non-perfect endoscopic setup by choosing $\epsilon > 0$ and setting $L = V + \epsilon (\cos(\alpha),\sin(\alpha),\beta)^\top$, where $\alpha$ and $\beta$ are uniformly distributed in $[0, 2\pi[$ and $[-0.5, 0.5]$, respectively.
%Which consists of placing $L$ and $V$ on the upper (resp., lower) half of a circle lying on the plane $Z=V_Z$, with centre $(0,0)^\top$ and radius $\epsilon$.}
%Which consists of placing $L$ and $V$ on opposite sides with respect to the plane orthogonal to the scene plane, containing the camera axis and passing through the origin $(0,0,0)^\top$.}
%\st{, where $D$ is a random direction vector drawn from a uniform spherical distribution.}
2) We apply a projective transformation to the scene plane by means of a homography $H$, to form its sought perspective image $I_s$.
We use a slanted view, as typical of colonoscopy, with a plane normal to camera axis angle $\theta = 58^{\circ}$. We set the camera intrinsics as $f=M$ for the focal length and $c_x=c_y=M/2$ for the principal point.
3) We add white Gaussian noise to the simulated image with variance $\sigma^2$. 
In stage 2, we proceed with the reconstruction process with the following two steps.
4) We use Gaussian smoothing to reduce the effect of noise. We then normalise the image intensities so that the intensity at the BP is 1. 
5) Given the isovalue  $0<t<1$, we run the proposed reconstruction method.
In stage 3, we compute the angular error on each of the estimated normals. 
The default parameters are $\epsilon =0$, $M=406$, $V_Z = 1000$, $n=50$, $\sigma = 5\%$, and $t=0.1$; they lead to the illustration from figure~\ref{fig:fig2}.

The error plots for 1000 independent realisations are shown in figure~\ref{fig:errorplots}.
%\hl{After verifying that the normal error is quasi-invariant to the plane to camera distance $V_Z$,} 
We vary five important parameters within their range of interest while freezing the others to their default values. First, the noise $\sigma$ is varied from 0\% to 10\% of the intensity range.
We observe that the normal error increases with noise, in average and in standard deviation.
It is on average kept below 1.25°, including the case of very noisy images.
%\st{Second, the plane to camera distance $V_Z$ is varied from 700 to 1200.
%This parameter does not have a unit as it defines the overall simulation scale factor.
%However, we chose our intrinsic camera parameters that $V_Z$ can reasonably be interpreted as mm.
%The variation range is thus typical of endoscopy.
%Increasing the distance shrinks the isophote, thus decreases the signal-to-noise ratio.
%Nonetheless, we observe that the error is kept steady at about 0.8}°.
Second, the plane to camera angle $\theta$ is varied from 0 (fronto-parallel view) towards $\frac{\pi}{2}$ (extreme grazing view). We observe a strong growth of the error, with nonetheless a maximum value kept under 7°.
Third, the material roughness $n$ is varied from 30 to 120.
The higher this parameter, the steeper the intensity profile of the specularity, making isophote detection more sensitive to noise.
We indeed observe that the normal error increases linearly with roughness, with a very small slope of approximately $10^{-2}$, without exceeding 1.75°.
Fourth, the isovalue $t$ is varied from 0.02 and 0.8. 
Recall that for a low value the isophote is wide and close to the outer ring whilst for a high value it shrinks closely to the BP.
Therefore the stable configurations should be between these two extremes.
We observe that this indeed happens, with a globally U-shaped normal error curve, reaching a minimum at about 0.5 to 0.6.
Fifth, the imperfect co-location of the light source and camera centre $\epsilon$ is varied between 0 and 800, which, as for $V_Z$ is in  number of pixels. 
Specifically, we first centre the scene plane's window at the BP and choose $n=100$ to explore larger space of model imperfections while ensuring that the model errors are not mixed with errors related to partial view. Recall that for a large value of $\epsilon$ the isophote is no longer a circle in the scene plane.
We observe that the normal error remains stable for a light source and camera located inside a cylinder of radius 200~mm and height 200~mm (\textit{i.e.}, for $\epsilon \leq 200$).
%\st{We observe a U-shaped curve with a flat basin, showing that the error increases with the the magnitude of $\epsilon$ but still obtains accurate results up to $|\epsilon| \leq 150$.} 
%a largely sufficient minimum margin of configuration stability within a range of $\pm$ 10\% for camera/light distance to camera/plane distance ratio.} 
This is much larger than the distance which may exist in the design of most endoscopes.
%We observe that this does not affect the normal error perceptibly.

\begin{figure*}[!h]
    \includegraphics[width=.195\textwidth]{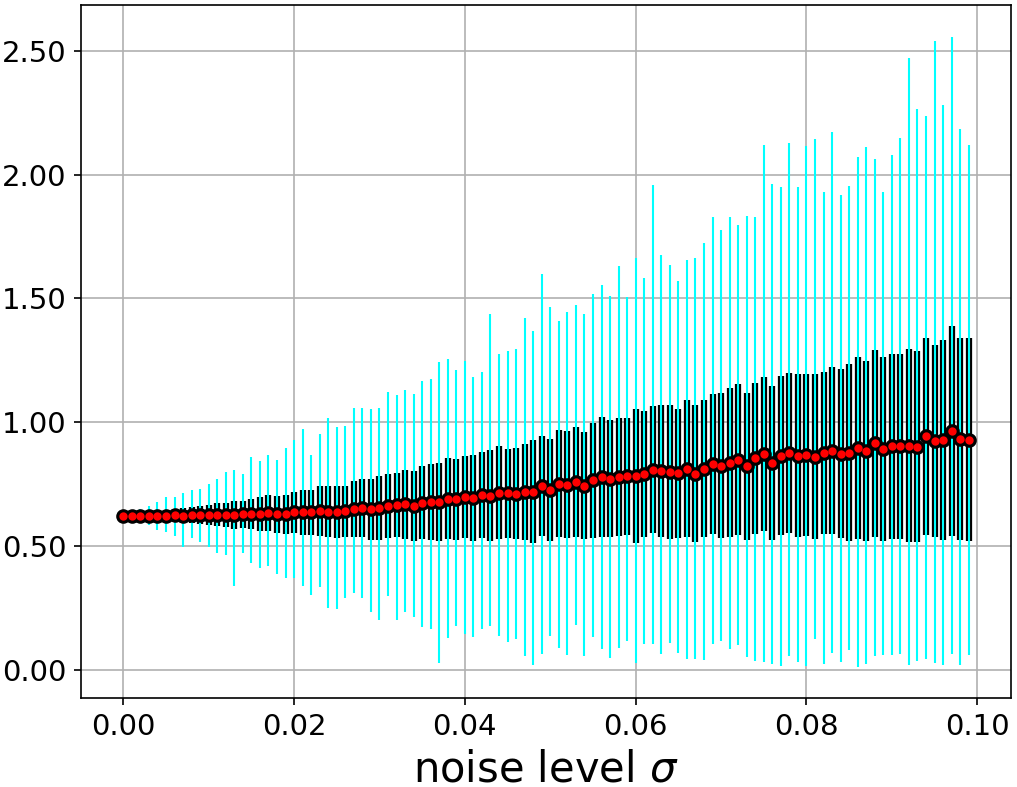}\hfill
    \includegraphics[width=.195\textwidth]{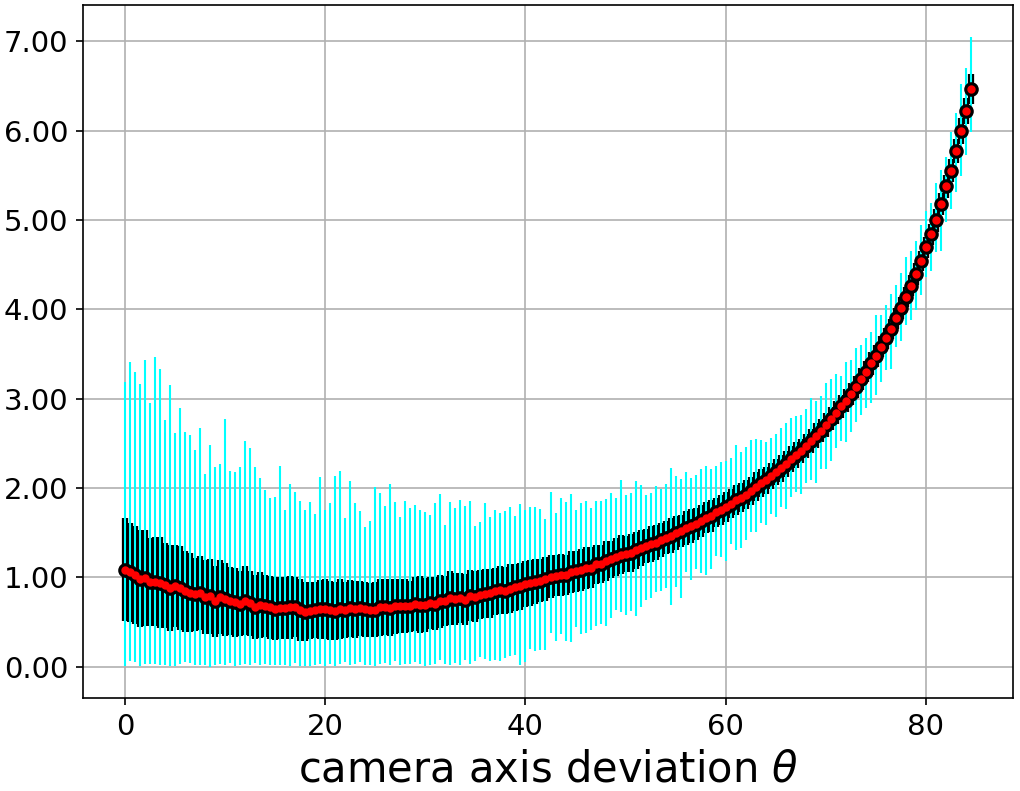}\hfill
    \includegraphics[width=.195\textwidth]{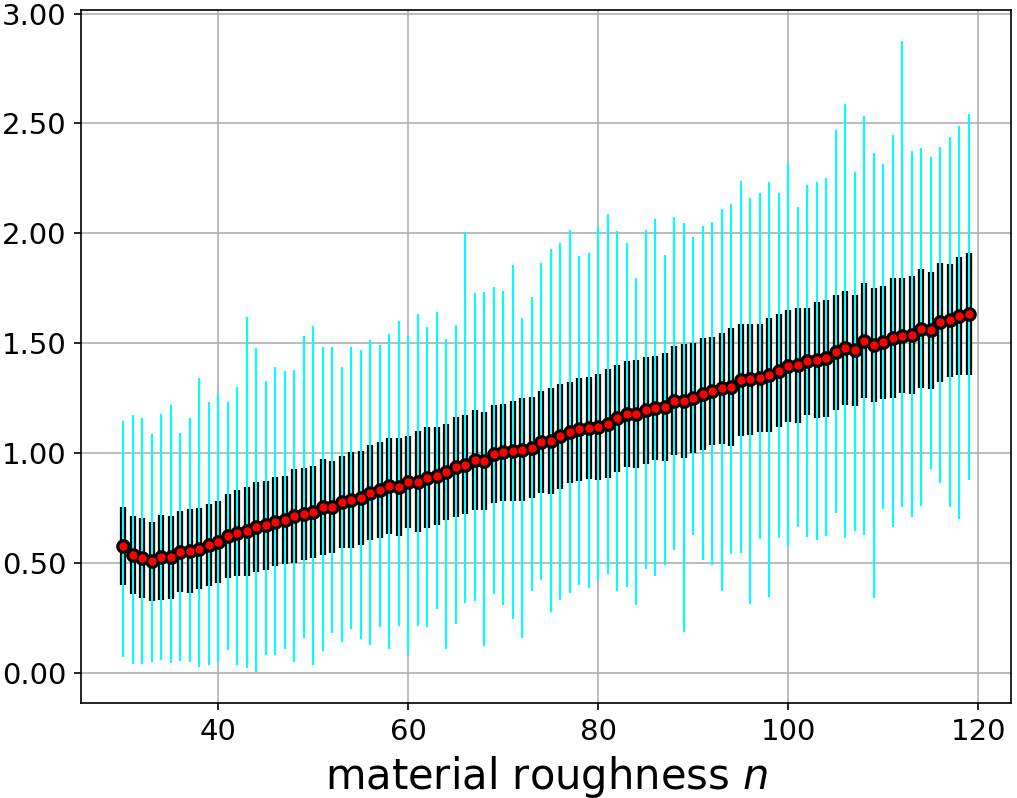}\hfill
    \includegraphics[width=.195\textwidth]{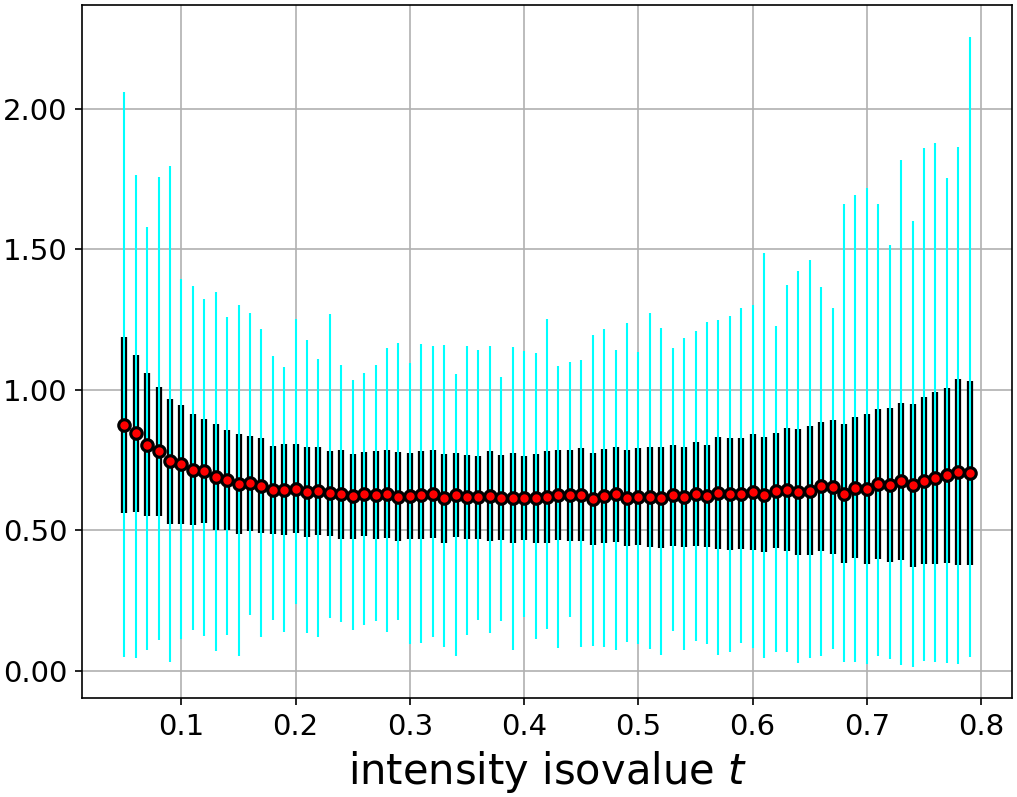}
    \includegraphics[width=.195\textwidth]{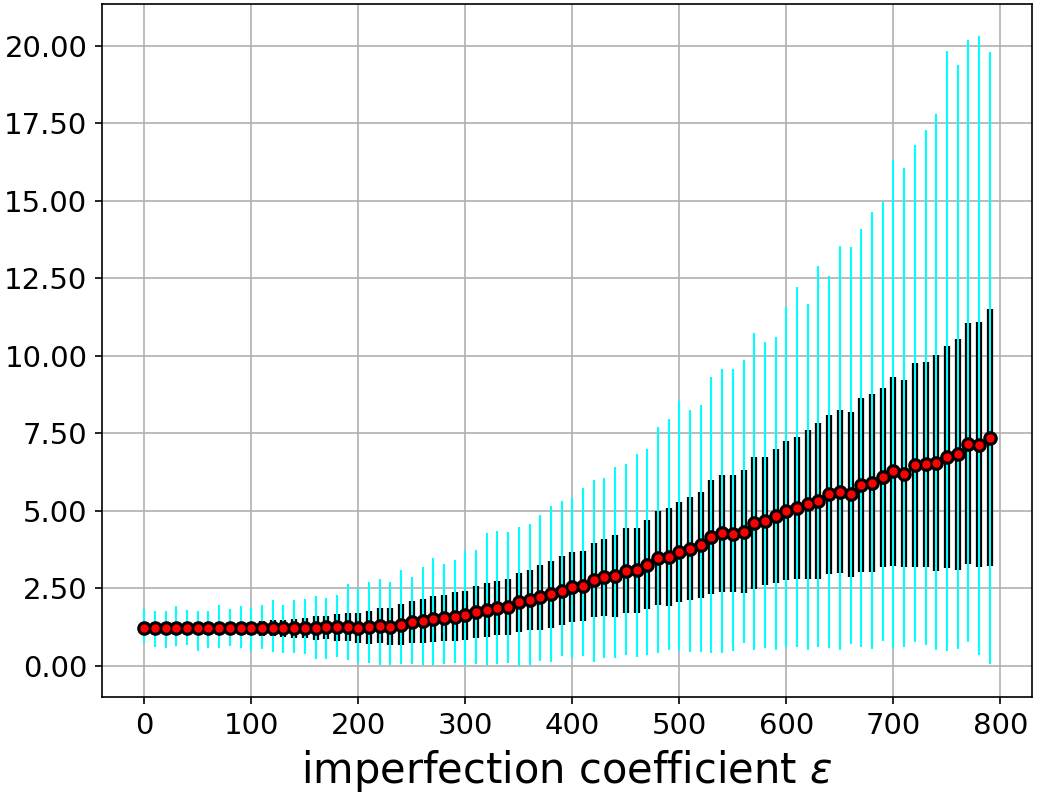}

    \caption{Evaluation results on synthetic data, giving the average normal error $\bar E$ in degrees. The black bars represent the error standard deviation and the cyan bars represent the interval $ [\bar{E} -\min(E), \max (E)-\bar{E}]$.}\label{fig:errorplots}
\end{figure*}
%To simulate a specular image for any arbitrary camera pose, we apply a known projective transformation $H$ (aka homography) to the scene plane (generated using~\eqref{Eq1} with $L \approx V$).  By decomposing $H$, it is possible to determine the ground truth normal to the scene plane in the camera coordinate system with an ambiguity. These two normal candidates will serve us to validate our methods.
\subsection{Real Data}
\label{subsec:clinical_data}
We used our method in two concrete cases, laparoscopic liver resection and colonoscopy.
In both cases, the practical problem is to obtain an interventional 3D model from images.
This is difficult because the observed structures are very deformable, precluding the use of many vision techniques such as Structure-from-Motion.
It is thus tremendously important to be able to exploit single-image 3D reconstruction, for which our method is well-adapted.
First, the camera can be easily calibrated when the procedure starts.
Second, because the observed structures are moist, the images contain many specularities.
Third, the specularities are typically small in size, and thus supported by a locally planar surface region.

\subsubsection{Laparoscopic Liver Resection}
\label{subsubsec:laparoscopic}
%Our method can be used in laparoscopic liver resection, where the main practical problems are to reconstruct the liver's intraoperative 3D shape and to register a preoperative 3D model extracted from CT to the surgical images~\cite{koo2017deformable, espinel2022using}.
%The liver is a very deformable organ, 
We ran our method on five images with a total of 26 specularities extracted from five procedures collected in our hospital under ethical approval IRB00008526-2019-CE58 issued by CPP Sud-Est VI in Clermont-Ferrand, France. A representative example is shown in figure~\ref{fig:liver}.
We disambiguated the normal-pairs manually and performed a comparison with~\cite{koo2017deformable}, a method which registers a preoperative 3D model to the image, obtaining an angular difference of $7.11^\circ\pm4.86^\circ$.
This shows a very good quantitative agreement between our method and~\cite{koo2017deformable}.

\begin{figure}[!h]%
    \centering
    \subfloat[\centering Laparoscopic image]{{\includegraphics[height=3cm,width=4.6cm]{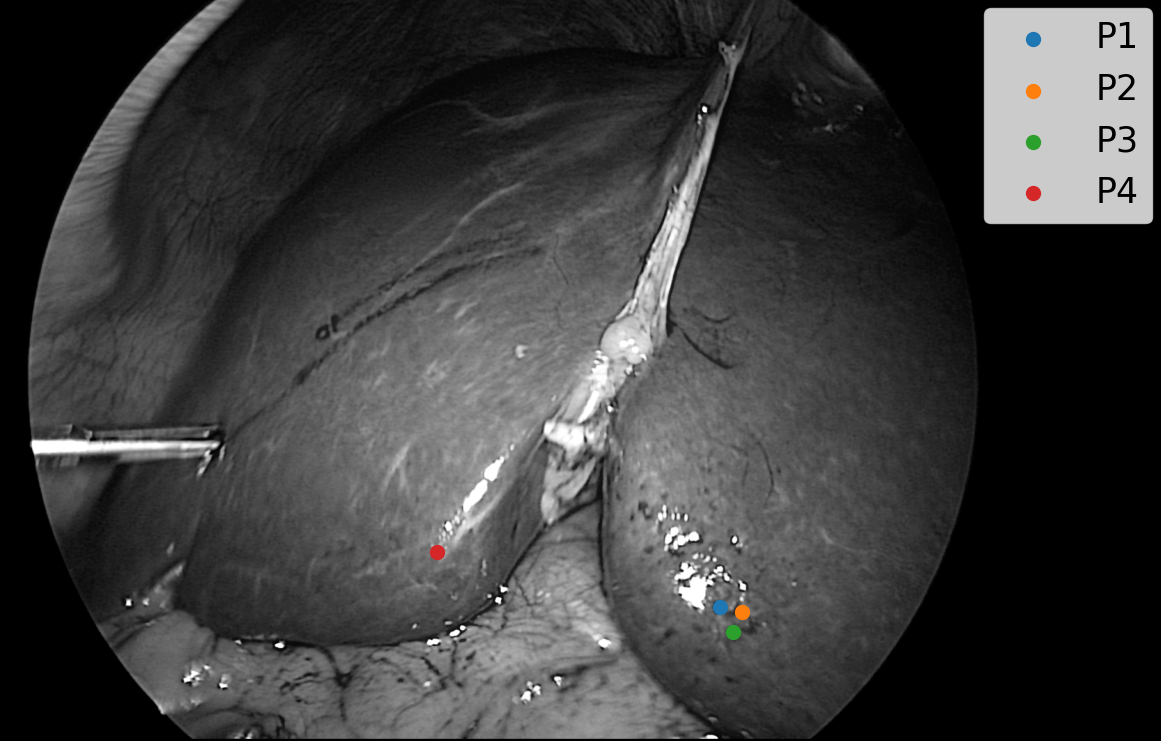} }}%
    \qquad
    \subfloat[\centering Results for P1]{{\includegraphics[width=2.8cm]{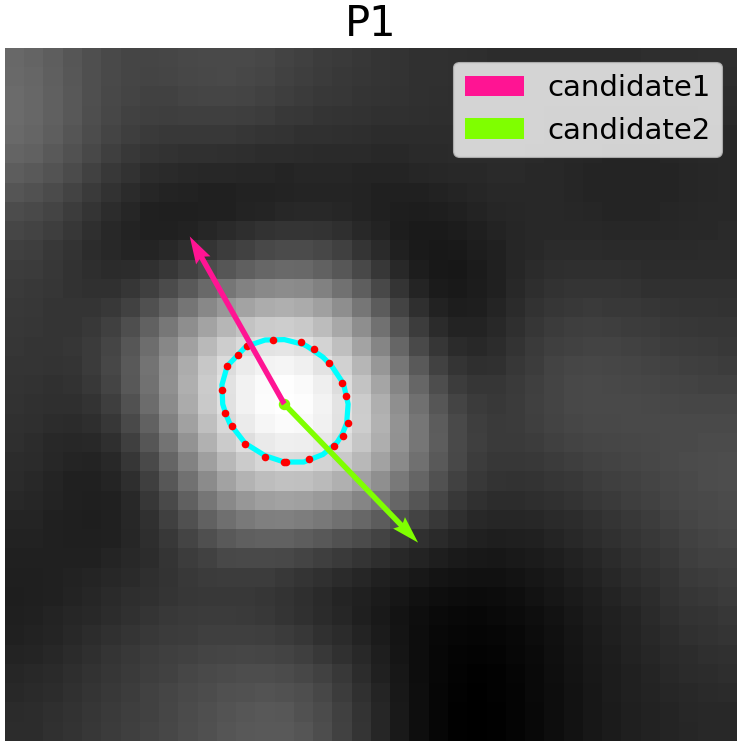} }}%
       \qquad
    \subfloat[\centering Reconstructed VS mesh normals]{{\includegraphics[height=2.5cm,width=4.2cm]{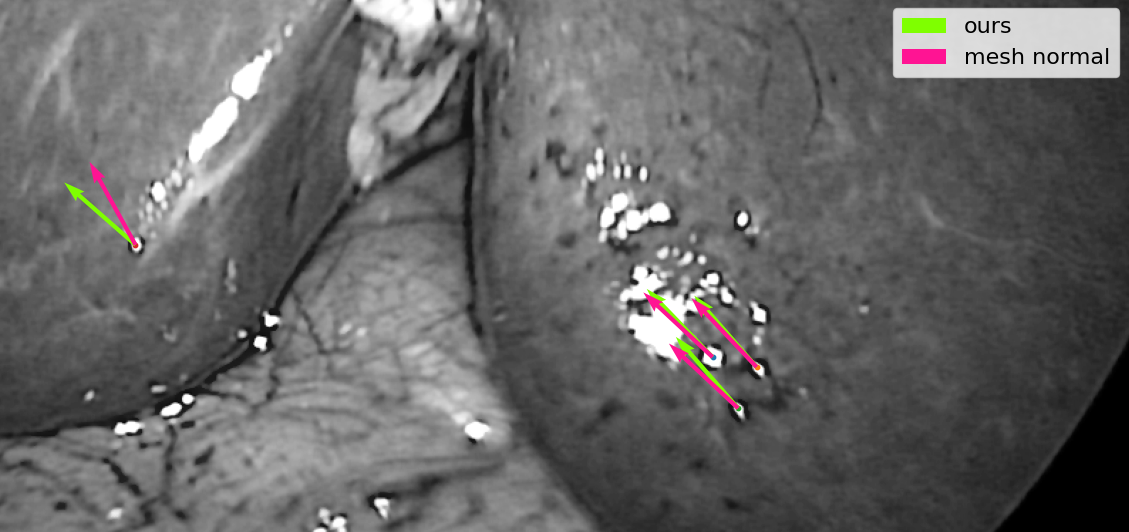} }}%
    \qquad
    \subfloat[\centering Reference 3D model]{{\includegraphics[width=3.5cm]{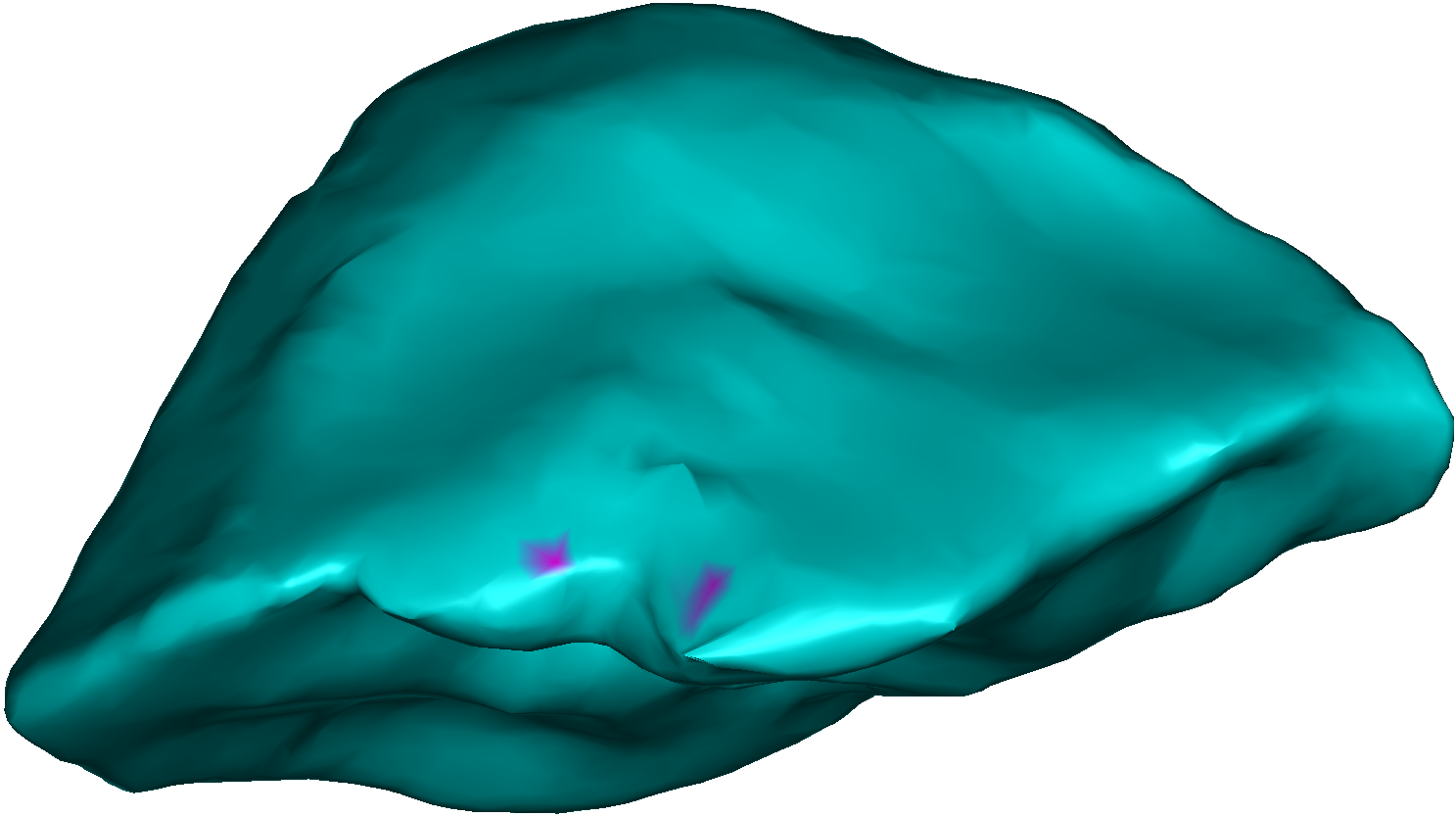} }}%
     \caption{Example of normal reconstruction for a 2D laparoscopic image. (a) The four specularities used. (b) Close-up and reconstructed normals for P1. (c) Reconstructed normals for our method compared to  preoperative model registration~\cite{koo2017deformable}. (d) Location of the four specularities on the registered preoperative model.}%
    \label{fig:liver}%
\end{figure}

\subsubsection{Colonoscopy}
\label{subsubsec:colonoscopic}
We ran our method on $1248\times 1080$ images extracted from colonoscopy procedures within the IRB approved protocol of the Endomapper project.
We show representative examples in figure~\ref{fig:colon} with 11 reconstructed normals.
We did not find a method which would provide a satisfying 3D reconstruction for comparison.
Nonetheless, the reconstructed normals we obtained are qualitatively sound, showing the potential applicability of our method to colonoscopy.
We considered using the simulated data from EndoSLAM~\cite{ozyoruk2021endoslam}, for which however the surface is made of fronto-parallel planes whose normal is thus always $(0,0,1)^\top$. %\hl{Whereas}~\cite{rau2019implicit} \hl{provides smooth depth maps but the simulated surfaces are not triangulated enough to have realistic texture or illumination patterns, see figure}~\ref{fig:endoslamVSucl}).
In contrast,~\cite{rau2019implicit} provides smoother depth maps but the surface triangulation is not dense enough to create realistic local specularities, as shown in figure~\ref{fig:endoslamVSucl}.

%specular reflection patterns

%\hl{As illustrated in figure}~\ref{fig:fig4} (top left), the concave/convex surface ambiguity cannot be removed without an additional information. 

%\hl{Here I will need the clinical context (I'll need it also at the end of introduction) and a brief description about the data acquisition protocols.}

%Fig~\label{fig:fig3} and companion results should be completed.

\begin{figure}[!h]
\centering
\includegraphics[width=0.5\textwidth]{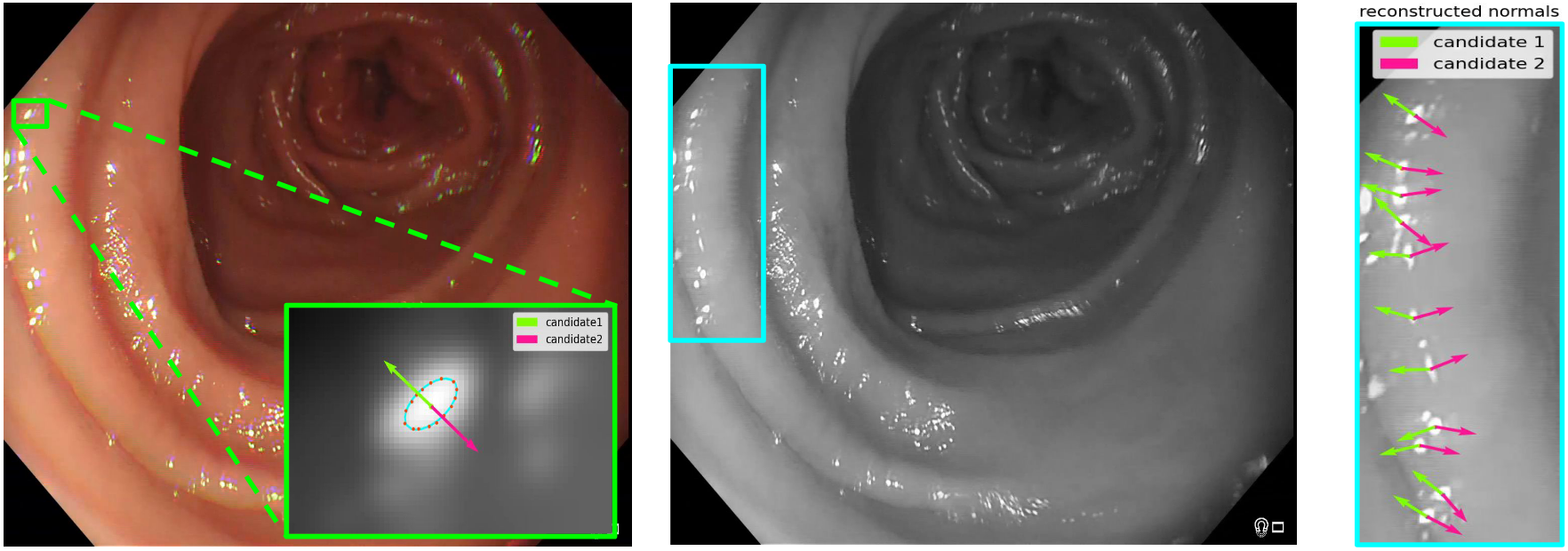}
\caption{\label{fig:colon} Sparse normal reconstruction for the colon surface. We show the two possible orientations of each local tangent plane containing an isolated specular highlight.} %Each orientation is encoded with one surface normal candidate (up to a sign).}
\end{figure} 

\begin{figure}[!h]
\centering
\includegraphics[width=0.5\textwidth]{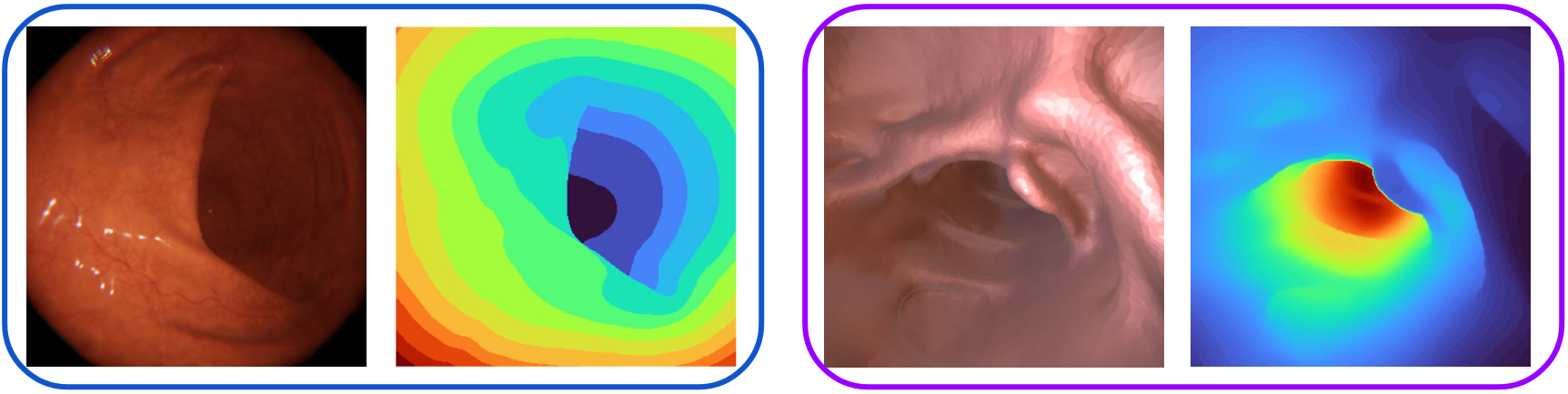}
\caption{\label{fig:endoslamVSucl} Excerpts of publicly available datasets, left: EndoSLAM~\cite{ozyoruk2021endoslam} and right:~\cite{rau2019implicit}.}
\end{figure} 

%\hl{Suggestion: As we do not dispose of ground truth surface normals for colonoscopic data, we can validate differently here. In fact, we can show that the image patches around specularities are planar. More precisely, by using ellipse fitting, we can show that the fitting error is empirically small. This can confirm that the specular isophotes are effectively ellipses which is still compatible with our proposed theory. }

\section{Conclusion}
\label{sec:conclusion}
We have introduced a geometric model of the specular isophotes in endoscopy, from which we have derived a normal reconstruction method. 
This is based on the general specular model~\cite{bartoli2019highlight}, which  we have specialised to endoscopy and extended to 3D reconstruction. 
We have obtained convincing results in two clinical applications.
Our future work will focus on two main points.
First, we will design automatic specularity detection by means of combining deep learning and the elliptic shape prior we have derived.
This will increase the applicability of our method and cope with the mixed specularities of non-elliptical shape owing the inter-reflections.
Second, we will incorporate our method into the tubular-NRSfM method applied to the colon~\cite{sengupta2021colonoscopic}.

\textbf{Declarations}
Informed consent was obtained from all individual participants included in the study. This article
does not contain any studies with animals performed by any of the authors.

% We show that such normal reconstruction method can deal with surfaces of tubular topology.

%------------------------------------------------------------------------- 
\bibliographystyle{IEEEbib}
\bibliography{strings,refs}

\begin{thebibliography}{10}

\bibitem{daher2022temporal}
R.~Daher, F.~Vasconcelos, and D.~Stoyanov,
\newblock ``A temporal learning approach to inpainting endoscopic specularities
  and its effect on image correspondence,''
\newblock {\em arXiv preprint arXiv:2203.17013}, 2022.

\bibitem{bell2013image}
CS. Bell, KL. Obstein, and P.~Valdastri,
\newblock ``Image partitioning and illumination in image-based pose detection
  for teleoperated flexible endoscopes,''
\newblock {\em Artificial intelligence in medicine}, vol. 59, no. 3, pp.
  185--196, 2013.

\bibitem{ozyoruk2021endoslam}
KB~Ozyoruk et~al.,
\newblock ``Endo{SLAM} dataset and an unsupervised monocular visual odometry
  and depth estimation approach for endoscopic videos,''
\newblock {\em Medical image analysis}, vol. 71, pp. 102058, 2021.

\bibitem{sengupta2021colonoscopic}
A.~Sengupta and A.~Bartoli,
\newblock ``Colonoscopic 3{D} reconstruction by tubular non-rigid
  structure-from-motion,''
\newblock {\em International Journal of Computer Assisted Radiology and
  Surgery}, vol. 16, no. 7, pp. 1237--1241, 2021.

\bibitem{zisserman1989information}
A.~Zisserman, P.~Giblin, and A.~Blake,
\newblock ``The information available to a moving observer from
  specularities,''
\newblock {\em Image and vision computing}, vol. 7, no. 1, pp. 38--42, 1989.

\bibitem{chen2006mesostructure}
T.~Chen, M.~Goesele, and H-P. Seidel,
\newblock ``Mesostructure from specularity,''
\newblock in {\em CVPR'06}. IEEE, 2006, vol.~2, pp. 1825--1832.

\bibitem{roth2006specular}
S.~Roth and M-J. Black,
\newblock ``Specular flow and the recovery of surface structure,''
\newblock in {\em CVPR'06}. IEEE, 2006, vol.~2, pp. 1869--1876.

\bibitem{sankaranarayanan2010specular}
A-C. Sankaranarayanan, A.~Veeraraghavan, O.~Tuzel, and A.~Agrawal,
\newblock ``Specular surface reconstruction from sparse reflection
  correspondences,''
\newblock in {\em CVPR}. IEEE, 2010, pp. 1245--1252.

\bibitem{liu2011pose}
M.~Liu, Kwan-Yee~K. Wong, Z.~Dai, and Z.~Chen,
\newblock ``Pose estimation from reflections for specular surface recovery,''
\newblock in {\em CVPR}. IEEE, 2011, pp. 579--586.

\bibitem{okatani1997shape}
T~Okatani and K~Deguchi,
\newblock ``Shape reconstruction from an endoscope image by shape from shading
  technique for a point light source at the projection center,''
\newblock {\em Computer vision and image understanding}, vol. 66, no. 2, pp.
  119--131, 1997.

\bibitem{yeung1999global}
SY~Yeung, Hung-Tat Tsui, and A~Yim,
\newblock ``Global shape from shading for an endoscope image,''
\newblock in {\em MICCAI}. Springer, 1999, pp. 318--327.

\bibitem{bartoli2019highlight}
A.~Bartoli,
\newblock ``The highlight ovals,''
\newblock {\em Journal of Mathematical Imaging and Vision}, vol. 61, no. 7, pp.
  919--943, 2019.

\bibitem{phong1975illumination}
Bui~Tuong Phong,
\newblock ``Illumination for computer generated pictures,''
\newblock {\em Communications of the ACM}, vol. 18, no. 6, pp. 311--317, 1975.

\bibitem{hartley2003multiple}
R.~Hartley and A.~Zisserman,
\newblock {\em Multiple view geometry in computer vision},
\newblock Cambridge university press, 2003.

\bibitem{fitzgibbon1999direct}
A.~Fitzgibbon, M.~Pilu, and R-B. Fisher,
\newblock ``Direct least square fitting of ellipses,''
\newblock {\em IEEE PAMI}, vol. 21, no. 5, pp. 476--480, 1999.

\bibitem{forsyth1991invariant}
D.~Forsyth, J-L. Mundy, A.~Zisserman, C.~Coelho, A.~Heller, and C.~Rothwell,
\newblock ``Invariant descriptors for 3{D} object recognition and pose,''
\newblock {\em IEEE PAMI}, vol. 13, no. 10, pp. 971--991, 1991.

\bibitem{koo2017deformable}
B.~Koo, E.~{\"O}zg{\"u}r, B.~Le Roy, E.~Buc, and A.~Bartoli,
\newblock ``Deformable registration of a preoperative 3{D} liver volume to a
  laparoscopy image using contour and shading cues,''
\newblock in {\em MICCAI}. Springer, 2017, pp. 326--334.

\bibitem{rau2019implicit}
A~Rau~et al.,
\newblock ``Implicit domain adaptation with conditional generative adversarial
  networks for depth prediction in endoscopy,''
\newblock {\em International journal of computer assisted radiology and
  surgery}, vol. 14, no. 7, pp. 1167--1176, 2019.

\end{thebibliography}

\end{document}